\newcommand\vldbdoi{XX.XX/XXX.XX}
\newcommand\vldbpages{XXX-XXX}
\newcommand\vldbvolume{14}
\newcommand\vldbissue{1}
\newcommand\vldbyear{2020}
\newcommand\vldbauthors{Xiyang Zhang, Chen Liang, Haoxuan Qiu, Hongzhi Wang}
\newcommand\shorttitle{Adaptive Data Selection for MLP Training}
\newcommand\vldbtitle{\shorttitle} 
\newcommand\vldbavailabilityurl{}
\newcommand\vldbpagestyle{plain} 
\begin{document}
\title{Adaptive Data Selection for Multi-Layer Perceptron Training: A Sub-linear Value-Driven Method}

\author{Xiyang Zhang}
\affiliation{%
  \institution{Faculty of Computing, Harbin Institute of Technology}
}
\email{24b903031@stu.hit.edu.cn}

\author{Chen Liang}
\affiliation{%
  \institution{Faculty of Computing, Harbin Institute of Technology}
}
\email{23b903050@stu.hit.edu.cn}

\author{Haoxuan Qiu}
\affiliation{%
  \institution{Faculty of Computing, Harbin Institute of Technology}
}
\email{2022112640@stu.hit.edu.cn}

\author{Hongzhi Wang}
\affiliation{%
  \institution{Faculty of Computing, Harbin Institute of Technology}
}
\email{wangzh@hit.edu.cn}

\begin{abstract}

Data selection is one of the fundamental problems in neural network training, particularly for multi-layer perceptrons (MLPs) where identifying the most valuable training samples from massive, multi-source, and heterogeneous data sources under budget constraints poses significant challenges. Existing data selection methods, including coreset construction, data Shapley values, and influence functions, suffer from critical limitations: they oversimplify nonlinear transformations, ignore informative intermediate representations in hidden layers, or fail to scale to larger MLPs due to high computational complexity. In response, we propose DVC (Data Value Contribution), a novel budget-aware method for evaluating and selecting data for MLP training that accounts for the dynamic evolution of network parameters during training. The DVC method decomposes data contribution into Layer Value Contribution (LVC) and Global Value Contribution (GVC), employing six carefully designed metrics and corresponding efficient algorithms to capture data characteristics across three dimensions—quality, relevance, and distributional diversity—at different granularities. DVC integrates these assessments with an Upper Confidence Bound (UCB) algorithm for adaptive source selection that balances exploration and exploitation. Extensive experiments across six datasets and eight baselines demonstrate that our method consistently outperforms existing approaches under various budget constraints, achieving superior accuracy and F1 scores. Our approach represents the first systematic treatment of hierarchical data evaluation for neural networks, providing both theoretical guarantees and practical advantages for large-scale machine learning systems.
\end{abstract}

\maketitle

\pagestyle{\vldbpagestyle}
\begingroup\small\noindent\raggedright\textbf{PVLDB Reference Format:}\\
\vldbauthors. \vldbtitle. PVLDB, \vldbvolume(\vldbissue): \vldbpages, \vldbyear.\\
\href{https://doi.org/\vldbdoi}{doi:\vldbdoi}
\endgroup
\begingroup
\renewcommand\thefootnote{}\footnote{\noindent
This work is licensed under the Creative Commons BY-NC-ND 4.0 International License. Visit \url{https://creativecommons.org/licenses/by-nc-nd/4.0/} to view a copy of this license. For any use beyond those covered by this license, obtain permission by emailing \href{mailto:info@vldb.org}{info@vldb.org}. Copyright is held by the owner/author(s). Publication rights licensed to the VLDB Endowment. \\
\raggedright Proceedings of the VLDB Endowment, Vol. \vldbvolume, No. \vldbissue\ %
ISSN 2150-8097. \\
\href{https://doi.org/\vldbdoi}{doi:\vldbdoi} \\
}\addtocounter{footnote}{-1}\endgroup

\ifdefempty{\vldbavailabilityurl}{}{
\vspace{.3cm}
\begingroup\small\noindent\raggedright\textbf{PVLDB Artifact Availability:}\\
The source code, data, and/or other artifacts have been made available at \url{\vldbavailabilityurl}.
\endgroup
}

\section{Introduction}

In machine learning, data selection has a decisive impact on model performance. The performance of machine learning models is inherently limited by an upper bound on data quality~\cite{jain2020overview}. Although researchers and practitioners have long focused on model architecture optimization and feature engineering, relatively little attention has been paid to data quality improvement~\cite{jain2020overview, gupta2021data}. With the explosive growth of data volumes, selecting high-quality training samples from massive, multi-source, and heterogeneous data sources while accurately evaluating their value for model training has become a key challenge for machine learning practitioners. Consider, for example, a financial fraud detection system that must integrate transaction records from various banks, credit agencies, and other commercial platforms. As illustrated in Fig~\ref{fig:Bank}, Each data source provides samples with different quality characteristics: some may contain labeling errors, others may exhibit distribution drift, and still others may have data duplication, while certain data sources provide high-quality samples that can significantly improve model performance~\cite{wang2020survey}.

\begin{figure}[ht]
    \centering
    \includegraphics[width=\columnwidth]{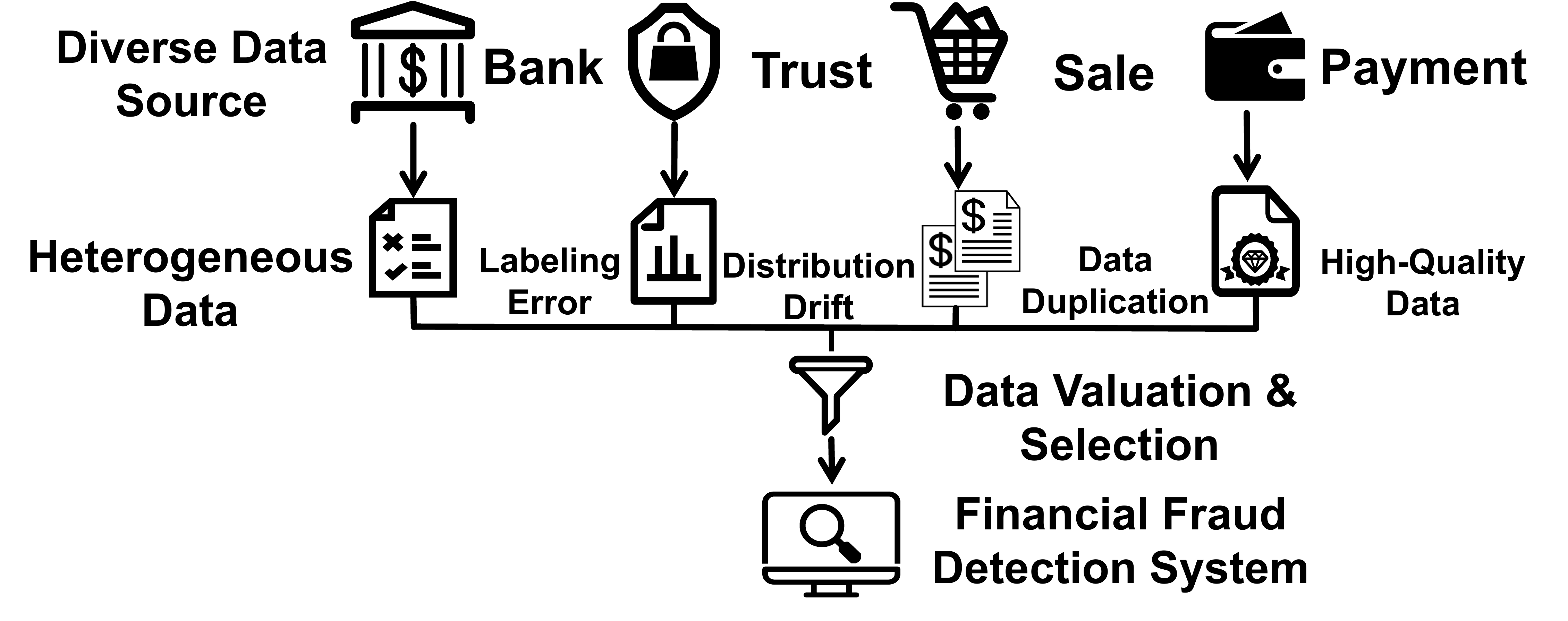}
    \caption{A Financial Fraud Detection System Example}
    \label{fig:Bank}
    \vspace{-4mm}
\end{figure}

The challenge becomes particularly acute when training multi-layer perceptrons (MLPs), which remain the backbone of numerous production systems due to their versatility and interpretability~\cite{sculley2015hidden, paleyes2022challenges, abadade2023comprehensive, cheng2016wide}. Unlike simple linear models where sample contributions can be easily analyzed, MLPs exhibit complex interactions between sample data and model parameters during training. Each training sample influences multiple network layers through forward propagation and subsequently shapes parameter updates via backpropagation in highly non-linear and interdependent ways. Although traditional random sampling is computationally efficient, it neglects data quality assessment, leading to an excessive proportion of low-quality samples in the selected training set, which typically results in suboptimal model performance.

To select the sample set that maximizes model performance, the key lies in systematically evaluating the potential contribution of data in the dynamic context of MLP training. This assessment must account for the hierarchical nature of representation learning, where shallow layers capture basic features while deeper layers encode increasingly abstract concepts. Moreover, practical applications require scalable data evaluation and selection methods. Specifically, the computational overhead of evaluating and selecting samples to form the training dataset should be orders of magnitude smaller than training the MLP with all original data. This advantage becomes particularly evident when dealing with millions of samples from dozens of sources.

Over the past few decades, numerous methods based on different theories have been proposed to address data selection challenges for MLP and other deep neural network training. Despite progress, existing data selection methods suffer from several essential limitations that remain unresolved, particularly in the context of adaptive data selection for MLP training, which can be summarized into five key limitations:

\textbf{\textit{Limitation 1: High computational overheads and scalability constraints.}} Most methods impose significant computational burdens. Uncertainty-based methods like BatchBALD improve batch diversity but add prohibitive costs for large-batch or real-time scenarios~\cite{kirsch2019batchbald}. Shapley-based approaches remain impractical due to exponential (or high polynomial) complexity even with approximations~\cite{qin2025shapley}. Meta-learning-based methods (e.g., reweighting models) require bilevel training loops and clean meta sets, increasing training time and resource demands~\cite{shu2019meta}.

\textbf{\textit{Limitation 2: Distribution drift and calibration failures.}} Data selection mechanisms often assume well-calibrated uncertainty or stable representations, though these assumptions are frequently violated in practice. Uncertainty-based methods suffer when models are overconfident or face domain shift, degrading selection quality~\cite{guo2017calibration, ovadia2019can}. Core-set-based methods rely on representation geometry that may shift during training, reducing selection robustness across domains~\cite{sener2018active}.

\textbf{\textit{Limitation 3: Fragility under non-convex training dynamics and deep networks.}} Methods rooted in first-order, linear, or model-agnostic approximations often fail when applied to deep, non-convex networks like MLPs. Influence-based approaches are empirically fragile in deep networks, with accuracy varying heavily across architecture depth and regularization regimes~\cite{basu2021influence, epifano2023revisiting}. Shapley-based methods, while theoretically sound, may yield misleading importance scores under realistic input distributions and struggle with correlation or out-of-distribution conditions~\cite{wang2024rethinking, kumar2020problems}.

\textbf{\textit{Limitation 4: Heavy validation data reliance and brittleness with meta-data scarcity.}} Meta-learning techniques often depend on small, clean validation or meta sets—a reliance that becomes a liability when such data are limited or domain-shifted. Meta-weight-net and learning-to-reweight methods suffer performance drops when meta labels are scarce or noisy~\cite{shu2019meta, ren2018learning}. Curriculum learning and teacher-student frameworks entail complex policy optimization and may not transfer across tasks without careful re-tuning or abundant meta-data~\cite{wang2021survey, matiisen2019teacher}.

\textbf{\textit{Limitation 5: Lack of adaptation to evolving model state.}} Most approaches use static selection criteria that do not adapt as model parameters evolve during training, missing opportunities to refine selection dynamically. For example, numerous influence-based, core-set-based, or Shapley-based techniques compute valuations once and remain fixed despite changing representations—a particular issue for MLPs whose layer representations evolve significantly during training.

In conclusion, a common limitation of existing approaches is the lack of a unified method that simultaneously considers data quality, task relevance, distributional diversity, and computational efficiency. This mismatch leads to ad-hoc selection strategies that produce biased or redundant datasets, limiting downstream performance. In response, we design the Data Value Contribution (DVC) method for MLP training, which systematically integrates quality, task relevance, distributional diversity, and efficiency into a unified framework. Our central insight is to decompose data value into layer-specific and global components, enabling fine-grained analysis of how each sample influences different levels of representation. Building on this, we develop multi-metric evaluation and scalable algorithms that together transform data selection into a principled, efficient, and adaptive process.

Specifically, our DVC method employs six complementary metrics to comprehensively evaluate sample value. At the layer-wise level, we assess: (1) \textit{Quality} ($Q_l$)—evaluating whether samples produce stable activation patterns suitable for reliable gradient computation; (2) \textit{Relevance} ($R_l$)—measuring how well sample gradients align with current learning objectives through gradient alignment analysis; and (3) \textit{Distributional Diversity} ($D_l$)—identifying samples that occupy previously underrepresented regions in learned feature spaces to promote exploration. At the global level, we evaluate: (4) \textit{Gradient Impact} ($GI$)—quantifying each sample's end-to-end contribution to network-wide parameter updates; (5) \textit{Conditional Uncertainty} ($CU$)—identifying regions where the model expresses high uncertainty to target valuable learning opportunities; and (6) \textit{Training Stability} ($TS$)—assessing the temporal consistency of sample contributions to filter out potentially disruptive examples.

To enable practical deployment, we develop five coordinated algorithms that collectively ensure computational efficiency: (1) \textit{Adaptive Weight Learning}—automatically discovering optimal metric combinations through Bayesian optimization without manual parameter tuning; (2) \textit{Layer Gradient Caching}—exploiting temporal locality to achieve high speedup in gradient computation through intelligent caching and reuse; (3) \textit{Fast Similarity Computation}—employing locality-sensitive hashing to reduce similarity query complexity from $O(n \cdot d)$ to $O(n^{\rho} \log n \cdot d)$ with $\rho < 1$; (4) \textit{Online Statistical Estimation}—maintaining incremental statistics that evolve continuously with the model to eliminate expensive batch recomputation; and (5) \textit{Adaptive Data Selection}—integrating multi-armed bandit techniques to provide principled source prioritization with theoretical regret guarantees.

In summary, our key contributions are:

\begin{itemize}

\item \textbf{\textit{Layer-aware valuation method.}} We introduce DVC, the first method to decompose data value into layer-wise and global contributions, capturing both local and holistic impacts on MLP learning.

\item \textbf{\textit{Comprehensive multi-metric data evaluation.}} We design six complementary metrics covering quality, relevance, and distributional diversity, enabling fine-grained yet globally consistent valuation.

\item \textbf{\textit{Scalable selection algorithms with theoretical guarantees.}} We propose efficient algorithms, including gradient caching, locality-sensitive hashing, and online estimation, augmented with Bayesian-based adaptive weighting and UCB algorithms for dynamic data selection. We establish convergence and generalization bounds that formally link higher data value to improved model performance.

\item \textbf{\textit{Extensive experimental validation.}} Comprehensive experiments across diverse datasets show DVC outperforms eight state-of-the-art methods, especially in noisy, imbalanced, and heterogeneous settings, while maintaining budget-scalable efficiency.

\end{itemize}

The rest of this paper is organized as follows. We review related work in Section~\ref{sec:related} and provide an overview of our approach in Section~\ref{sec:overview}, which includes problem formulation, method overview, and key challenges specific to MLPs. Subsequently, the DVC method's six data contribution metrics and five coordinated algorithms are introduced in Sections~\ref{sec:metrics} and~\ref{sec:algorithms}, respectively. Theoretical guarantees are provided in Section~\ref{sec:theory}. Section~\ref{sec:experiments} reports experimental results and Section~\ref{sec:conclusion} concludes the paper.

\section{Related Works}

\label{sec:related}

\subsection{Uncertainty-Based Data Selection}

Uncertainty-based selection prioritizes samples that the model is least confident about to boost information gain and reduce labeling cost. Classical heuristic methods, considering least confidence, margin, and entropy, rank per-sample uncertainty~\cite{settles2009active, cohn1996active}. Bayesian mutual-information approaches (BALD/BatchBALD) extend this to batch settings via submodular greedy selection but incur high computational overhead~\cite{houlsby2011bayesian, kirsch2019batchbald}. BADGE couples uncertainty with diversity through k-means on gradient embeddings, avoiding hyperparameter tuning~\cite{ashdeep}. Sparse-subset formulations view batch active learning as subset approximation with closed-form linear solutions and random projections~\cite{pinsler2019bayesian}. Recent large-batch variants (e.g., Big-Batch Bayesian Active Learning) target scalability by focusing on epistemic uncertainty with some loss in approximation fidelity~\cite{ober2024big}.

\subsection{Core-Set-Based Data Selection}

Core-set-based methods seek compact, representative subsets that preserve the geometry of the dataset. Early techniques provide coverage through $k$-means or $k$-center summaries in input space~\cite{har2004coresets, bachem2018scalable}. Moving to learned features, core-sets for CNNs improve batch quality~\cite{sener2018active, holzmuller2023framework}. Training-aware variants (CRAIG, GradMatch) match complete data gradients for convergence or data efficiency, while GLISTER uses validation-driven bilevel optimization to improve generalization~\cite{mirzasoleiman2020coresets, killamsetty2021grad, killamsetty2021glister}. Tools such as DeepCore standardize large-scale evaluation~\cite{guo2022deepcore, holzmuller2023framework}. Recent progress targets scalability and robustness via model-agnostic fast selection, weighted $k$-centers, sensitivity sampling, and low-cost coresets under incomplete data, while surveys synthesize open challenges~\cite{jain2023efficient, ramalingam2023weighted, axiotis2024data, chai2023goodcore, moser2025coreset}.

\subsection{Shapley-Based Data Selection}

Shapley-style valuation quantifies each point's contribution to model performance with practical estimators for deployment. Data-Shapley formalizes the metric and proposes Monte Carlo and gradient-based approximations~\cite{ghorbani2019data}. Complementary influence and trace-based techniques (influence functions, TracIn) offer scalable per-example impact estimates~\cite{koh2017understanding, pruthi2020estimating}. To scale exact computation, KNN-Shapley and Threshold/CKNN surrogates leverage pretrained embeddings and privacy-aware thresholds~\cite{jia2021scalability, wang2023threshold}. Accelerated and structured algorithms provide convergence guarantees and efficient KNN-specific computation~\cite{watson2023accelerated, wang2024efficient}. Learning-based data valuation (DVRL, LAVA) and benchmarks (OpenDataVal) broaden applicability and comparison across tasks, while federated and privacy-aware adaptations extend to distributed settings~\cite{yoon2020data, just2023lava, jiang2023opendataval, wang2020principled}. Overall, these works trade exactness for efficiency and privacy while retaining strong selection utility.

\subsection{Influence-Based Data Selection}

Influence-based methods attribute training point effects on parameters or predictions for selection, debugging, and defense. Foundational influence function methods enable principled what-if analyses~\cite{koh2017understanding}. Scalable estimators, such as TracIn and representer-point selection, sacrifice exactness for checkpoint or gradient tracing and closed-form proxies~\cite{pruthi2020estimating, yeh2018representer}. Adaptations to active acquisition (ISAL, RALIF) prioritize high-impact examples for label efficiency~\cite{liu2021influence, xia2023reliable}. Influence signals also aid robustness and unlearning for mislabeled or poisoned data~\cite{koh2017understanding, li2024delta}. Subsequent work diagnoses failures and improves scalability via accuracy analyses, iHVP/inverse-Hessian speedups, and layer-wise heuristics~\cite{koh2019accuracy, schioppa2022scaling, basu2021influence}. Surveys and large-model studies summarize trade-offs between accuracy and cost, as well as limitations for LLM-scale deployment~\cite{hammoudeh2024training, li2024influence}.

\subsection{Meta-Learning-Based Data Selection}

Meta-learning treats data selection as a learnable policy or meta-objective. Meta-reweighting (L2RW, Meta-Weight-Net) learns example weights using a clean validation set to handle noise and bias~\cite{ren2018learning, shu2019meta}. Teacher-student and curriculum strategies optimize data scheduling through reinforcement learning or meta-objectives (Learning to Teach, TSCL, model-based meta curriculum)~\cite{fan2018learning, matiisen2019teacher, xu2023model}. Recent advances learn pivotal meta samples and select weighted task subsets (DERTS) to approximate full meta-gradients, improving robustness and efficiency~\cite{wu2023learning, zhan2024data}. Other pipelines learn data selection measures for transfer via Bayesian optimization and design teaching strategies for streaming or knowledge-tracing settings, with overviews positioning these methods for practical pipelines~\cite{ruder2017learning, abdelrahman2023learning, yoon2020data}.

\section{Overview}
\label{sec:overview}

\subsection{Problem Definition}

Consider a multi-source learning scenario where we have $K$ heterogeneous data sources $\mathcal{S} = \{S_1, S_2, \ldots, S_K\}$, each containing labeled samples from potentially different distributions with varying quality characteristics. Our objective is to select an optimal subset $\mathcal{D}^* \subset \bigcup_{i=1}^K S_i$ under a fixed budget constraint $B$ such that a multi-layer perceptron $f_\theta: \mathbb{R}^d \rightarrow \mathbb{R}^c$ trained on this subset achieves minimal expected risk on the target test distribution.

Formally, we define the optimal data selection problem as:
\begin{equation}
\mathcal{D}^* = \arg\min_{\mathcal{D}: |\mathcal{D}| \leq B} \mathbb{E}_{(x,y) \sim \mathcal{P}_{\text{test}}}[\ell(f_{\mathcal{D}}(x), y)]
\label{eq:main_objective}
\end{equation}

where $f_{\mathcal{D}}$ denotes the MLP trained on dataset $\mathcal{D}$, $\mathcal{P}_{\text{test}}$ represents the target test distribution, and $\ell$ is the task-specific loss function. The loss function varies with the learning task.

For classification tasks, we employ the cross-entropy loss:
\begin{equation}
\ell_{\text{cls}}(f(x),y) = -\sum_{c=1}^{C} \mathbf{1}_{\{y=c\}} \log p_c, \quad \text{where } p = f(x)
\label{eq:classification_loss}
\end{equation}
where $p_c$ represents the softmax probability for class $c$.

For regression tasks, we utilize the mean squared error:
\begin{equation}
\ell_{\text{reg}}(f(x),y) = \frac{1}{2}\|f(x)-y\|_2^2
\label{eq:regression_loss}
\end{equation}

When the network simultaneously predicts both mean $\mu(x)$ and variance $\sigma^2(x)$, we employ the negative log-likelihood formulation:
\begin{equation}
\ell_{\text{reg}} = \frac{1}{2}\left[\frac{(y-\mu(x))^2}{\sigma^2(x)} + \log\sigma^2(x)\right] + \text{const}
\label{eq:nll_loss}
\end{equation}

The complexity of this optimization problem stems from the discrete nature of subset selection combined with the non-convex training dynamics of neural networks, making exhaustive search computationally intractable for realistic dataset sizes.

The problem operates under practical constraints: budget limitations $|\mathcal{D}^*| \leq B$, computational efficiency requirements $C_{\text{select}} \leq \alpha \cdot C_{\text{train}}$, and multi-source availability $\mathcal{D}^* \subset \bigcup_{i=1}^K S_i$. This creates a complex combinatorial optimization landscape where traditional approaches often fail.

\subsection{Problem Analysis}

The fundamental challenge lies in the exponential nature of the search space. Given a dataset of size $n$ and budget constraint $B$, the number of possible sample combinations is $\binom{n}{B} = \frac{n!}{B!(n-B)!}$, which grows exponentially with dataset size. For typical scenarios with $n = 10^6$ samples and $B = 10^5$ budget, this represents approximately $10^{93,000}$ possible configurations—a number that exceeds the computational capacity of any existing system. Moreover, the data selection problem is proven NP-hard through reduction from the Set Cover problem, meaning no polynomial-time algorithm can guarantee optimal solutions. This theoretical intractability is compounded by MLP-specific complexities: each sample's value depends on dynamic network parameters that evolve during training, layer-wise feature transformations create non-linear dependencies, and the multi-source nature introduces additional combinatorial challenges in source allocation. These factors necessitate approximate algorithms that balance selection quality with computational feasibility.

Existing data selection methods face a critical limitation when applied to MLP training that they cannot simultaneously achieve comprehensive sample assessment and computational scalability. This creates a false dichotomy where practitioners must choose between two options.

\textbf{Option 1 - Straightforward Approaches:} Methods like random sampling or single-criterion selection with criterion such as uncertainty and influence functions are computationally efficient but fail to capture the complex multi-layered interactions in MLPs. These approaches consider too little, leading to suboptimal training sets that miss valuable samples or include redundant ones.

\textbf{Option 2 - Exhaustive Approaches:} Theoretically optimal methods that consider all possible sample combinations and their complex interactions with MLP training dynamics. However, these approaches consider too much, becoming expensive, with exponential computational requirements that make them impractical for real-world datasets. 

\subsection{Method Overview}
To address the challenges outlined above, we propose the Data Value Contribution (DVC) method, a systematic approach that decomposes sample value into complementary perspectives and employs adaptive algorithms for efficient selection. This section overviews the propose approach.

The core insight driving DVC is that this dichotomy is false. That is, it is possible to achieve comprehensive assessment while maintaining computational tractability through intelligent algorithmic design and principled approximations.

\paragraph{\textbf{Methodology}} The central insight underlying DVC is that effective data selection for MLPs requires assessment at multiple granularities and perspectives. Unlike existing methods that rely on single-criterion evaluation, We recognizes that sample value emerges from the interaction between local feature learning dynamics and global optimization objectives. This leads to a dual-perspective decomposition: Layer-wise Value Contribution (LVC) captures sample impact at individual network layers, and Global Value Contribution (GVC) assesses network-wide effects. The details will be discussed in Section~\ref{sec:metrics}.

The DVC method operationalizes this decomposition through a six-dimensional assessment strategy that captures sample informativeness across quality, relevance, and distributional diversity at both layer-wise and global levels (detailed in Section~\ref{sec:metrics}). However, implementing this comprehensive evaluation faces three critical computational challenges: (1) computing layer-wise gradients repeatedly becomes prohibitively expensive, (2) similarity computations for diversity assessment scale quadratically with dataset size, and (3) integrating multiple metrics requires dynamic weight adaptation and source prioritization strategies. Addressing these bottlenecks necessitates specialized algorithmic innovations that maintain evaluation quality while achieving sublinear computational complexity. This motivates our coordinated algorithmic approach (Section~\ref{sec:algorithms}) involving adaptive optimization, intelligent caching, approximate similarity computation, online statistics, and principled exploration strategies.

To achieve real-time data selection and addresses challenges, DVC employs five coordinated algorithmic innovations: adaptive weight optimization, intelligent gradient caching, approximate similarity computation, online statistical maintenance, and bandit-based source selection. These strategies collectively enable efficient implementation while preserving evaluation quality, forming the foundation for DVC's operational workflow.

\begin{figure*}[!t]
  \centering
  \includegraphics[width=\textwidth]{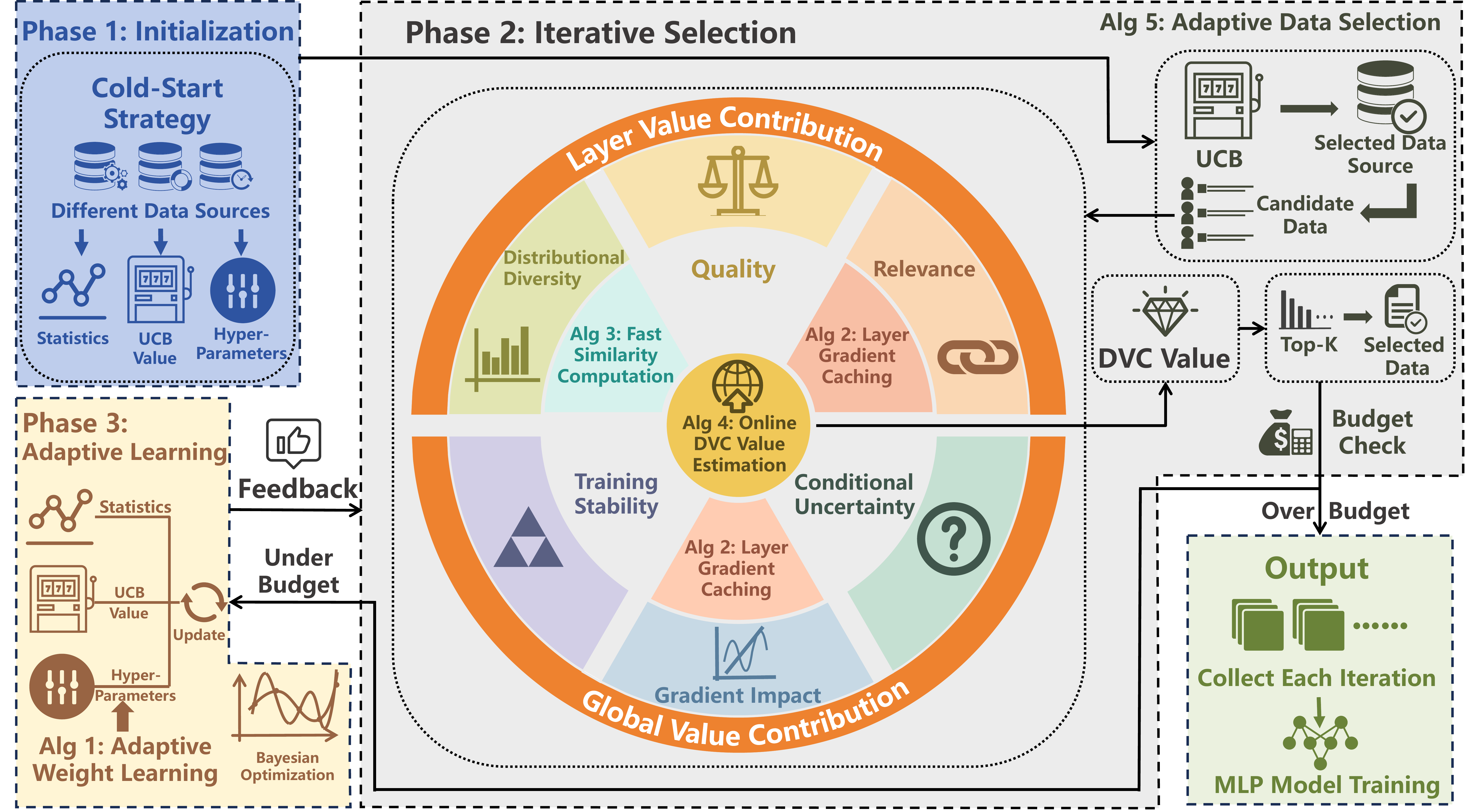}
  \caption{DVC Method Workflow: Three Phases, Six Metrics and Five Algorithms included}
  \label{fig:Workflow}
\end{figure*}

\paragraph{\textbf{Method Workflow}} The DVC method operates through three coordinated phases as illustrated in Fig~\ref{fig:Workflow}:

\textbf{Phase 1 - Initialization:} System components establish data structures and statistical baselines, employing cold-start strategies to gather sufficient observations for reliable metric computation (Section~\ref{sec:algorithms}).

\textbf{Phase 2 - Iterative Selection:} The main operational cycle implements adaptive sample evaluation using six-dimensional assessment (Section~\ref{sec:metrics}) and coordinated selection algorithms (Section~\ref{sec:algorithms}). Each iteration involves source prioritization, candidate generation, comprehensive value assessment, and diversified selection.

\textbf{Phase 3 - Adaptive Learning:} Continuous feedback mechanisms enable system adaptation through incremental statistical updates, Bayesian weight optimization, and bandit-based source quality refinement (Section~\ref{sec:algorithms}).

The DVC method delivers several key advantages that distinguish it from existing data selection approaches. First, the layer-wise decomposition represents the first systematic treatment of hierarchical feature learning in neural networks, recognizing that different layers capture distinct levels of abstraction and require specialized evaluation criteria. This is particularly crucial for MLPs where shallow layers learn basic feature combinations while deeper layers encode complex nonlinear patterns. Second, our adaptive weighting mechanism automatically discovers optimal metric combinations without manual hyperparameter tuning, making the method broadly applicable across diverse domains and datasets. Third, the integration of multi-armed bandit techniques provides principled handling of heterogeneous data sources with unknown quality characteristics, essential for real-world scenarios involving multiple sources with varying reliability and distributional properties.

Most importantly, our algorithmic innovations ensure that computational complexity scales with the selection budget rather than the full dataset size, achieving $O(\text{budget} \cdot n^{\rho} \log n \cdot d)$ complexity where $\rho < 1$. This sublinear scaling represents a fundamental advantage over traditional methods that scale linearly with dataset size, becoming increasingly significant as data volumes grow. The coordinated combination of gradient caching, LSH-based similarity computation, and online statistical estimation creates a system that maintains comprehensive evaluation quality while delivering substantial computational savings. These design choices collectively address the core tension between assessment thoroughness and computational efficiency, enabling practical deployment at scale while providing formal theoretical guarantees.

\section{Data Value Contribution Evaluation Metrics}
\label{sec:metrics}

We decompose sample value into layer-wise and global contributions, addressing hierarchical feature development and end-to-end optimization respectively. Our six-dimensional framework captures quality, relevance, and diversity at different granularities.

For a sample $(x, y)$, we define its Data Value Contribution (DVC) as:

\begin{equation}
\text{DVC}(x, y) = \sum_{l=1}^L \lambda_l \cdot \text{LVC}_l(x, y) + \mu \cdot \text{GVC}(x, y)
\label{eq:dvc_unified}
\end{equation}

where adaptive weights $\{\lambda_l, \mu\}$ satisfy normalization constraints. 

\subsection{Layer-wise Value Metrics}

The layer-wise decomposition represents the first systematic approach to account for hierarchical feature learning in neural networks, recognizing that different layers capture distinct levels of abstraction and require specialized evaluation criteria. This design choice is particularly crucial for MLPs where shallow layers learn basic feature combinations while deeper layers encode complex nonlinear patterns. Traditional data selection methods treat neural networks as black boxes, failing to leverage the rich intermediate representations that emerge during forward propagation. By evaluating sample contributions at each layer, our approach captures the evolving nature of feature representations throughout the network hierarchy. This granular assessment enables more informed selection decisions by identifying samples that contribute meaningfully to specific aspects of the learning process. Moreover, layer-specific evaluation allows the method to adapt to different training phases, where early layers may benefit from high-quality foundational samples while deeper layers require diverse examples for robust generalization.

We decompose layer-wise contributions into three dimensions:

\begin{equation}
\text{LVC}_l(x, y) = \alpha_l \cdot Q_l(x) + \beta_l \cdot R_l(x, y) + \gamma_l \cdot D_l(x)
\label{eq:lvc_decomposition}
\end{equation}

where the three components represent:

\paragraph{\textbf{Quality Assessment $Q_l(x)$}} The quality metric evaluates whether a sample produces stable and informative activations at layer $l$. In MLP training, samples that generate extremely large or small activation magnitudes can lead to gradient instability, vanishing gradients, or exploding gradients. Our quality measure compares the activation norm of a candidate sample against the median activation norm of a reference set, providing a normalized assessment of activation stability. The sigmoid function ensures the metric remains bounded in $[0,1]$, with values near 0.5 indicating normal activation patterns, while extreme values suggest potentially problematic samples.

\begin{equation}
Q_l(x) = \text{sigmoid}\left(\frac{\|h_l(x)\|_2}{\text{median}(\|h_l(X_{\text{ref}})\|_2)} - 1\right)
\label{eq:quality_measure}
\end{equation}

\paragraph{\textbf{Relevance Assessment $R_l(x, y)$}} The relevance metric measures how well a sample's gradient at layer $l$ aligns with the current learning direction. This is crucial for MLP training because samples whose gradients point in directions consistent with the overall optimization trajectory are more likely to contribute positively to parameter updates. We compute the cosine similarity between the sample's layer-wise gradient and the expected gradient direction over a mini-batch. High relevance scores indicate samples that will push the model parameters in directions that support the current learning objectives, while low scores suggest samples that may introduce conflicting or noisy gradient signals.

\begin{equation}
R_l(x, y) = \cos\left(\frac{\partial \ell(x, y)}{\partial h_l}, \mathbb{E}_{(x',y') \sim \mathcal{B}}\left[\frac{\partial \ell(x', y')}{\partial h_l}\right]\right)
\label{eq:relevance_measure}
\end{equation}

\paragraph{\textbf{Distributional Diversity Assessment $D_l(x)$}} The diversity metric identifies samples that provide novel information by occupying previously underexplored regions in the learned feature space at layer $l$. This prevents the selection algorithm from repeatedly choosing similar samples that provide redundant information. We model the feature space density using a Gaussian kernel density estimator, where the diversity score increases logarithmically as the sample becomes more distant from previously seen samples. This formulation naturally balances novelty (high diversity for outliers) with stability (avoiding extremely anomalous samples that might be noisy).

\begin{equation}
D_l(x) = -\log\left(\frac{1}{|X_{\text{seen}}|}\sum_{x' \in X_{\text{seen}}} \exp\left(-\frac{\|h_l(x) - h_l(x')\|^2}{2\sigma_l^2}\right)\right)
\label{eq:distributional_contribution}
\end{equation}

\subsection{Global Value Metrics}

While layer-wise metrics provide granular insights into hierarchical feature learning, global metrics assess the holistic impact of samples on the entire network's optimization trajectory. This dual perspective ensures comprehensive value assessment that captures both local feature learning dynamics and emergent network-wide behaviors. Global metrics address the fundamental challenge that sample value cannot be fully understood through isolated layer analysis—the complex interactions and dependencies across layers create emergent properties that require end-to-end evaluation. Our global assessment strategy recognizes that effective data selection must balance local feature quality with overall optimization effectiveness. The three global dimensions—gradient impact, conditional uncertainty, and training stability—collectively capture the essential aspects of sample contribution to network-wide learning. This design ensures that selected samples not only provide valuable local features but also contribute positively to the overall training dynamics, maintaining model stability while promoting efficient convergence toward optimal parameter configurations.

For global assessment, we evaluate three network-wide metrics that capture end-to-end learning dynamics:

\begin{equation}
\text{GVC}(x, y) = \xi \cdot \text{GI}(x, y) + \zeta \cdot \text{CU}(x, y) + \eta \cdot \text{TS}(x, y)
\label{eq:gvc_decomposition}
\end{equation}

\paragraph{\textbf{Gradient Impact Assessment $\text{GI}(x, y)$}} The gradient impact metric quantifies how significantly a sample contributes to the overall parameter updates across the entire network. Unlike layer-wise relevance that focuses on individual layers, gradient impact considers the end-to-end optimization effect. We compute both the magnitude of the sample's gradient (indicating the strength of the update) and its alignment with the current average gradient direction (indicating whether the update is constructive). This dual consideration ensures we select samples that provide both strong and well-directed learning signals, avoiding samples that either provide weak gradients or pull the optimization in counterproductive directions.

\begin{equation}
\text{GI}(x, y) = \left\|\frac{\partial \ell(x, y)}{\partial \theta}\right\|_2 \cdot \cos\left(\frac{\partial \ell(x, y)}{\partial \theta}, \bar{g}\right)
\label{eq:gradient_impact}
\end{equation}

\paragraph{\textbf{Conditional Uncertainty Assessment $\text{CU}(x, y)$}} The uncertainty metric identifies samples where the model exhibits high predictive uncertainty, indicating regions where additional training data would be most beneficial. We combine output-level uncertainty (measured through prediction entropy) with intermediate representation uncertainty (measured through hidden layer activations). This comprehensive uncertainty assessment helps identify samples that lie in decision boundaries or represent underexplored regions of the input space, where the model's knowledge is most incomplete and would benefit most from additional training.

\begin{equation}
\text{CU}(x, y) = H(p(y|x, \theta)) + \sum_{l=1}^L \lambda_l \cdot H(h_l(x))
\label{eq:conditional_uncertainty}
\end{equation}

\paragraph{\textbf{Training Stability Assessment $\text{TS}(x, y)$}} The stability metric evaluates the temporal consistency of a sample's contribution to the training process. Samples with high stability maintain consistent loss values across recent training iterations, indicating they provide reliable learning signals. Conversely, samples with high loss variance may represent outliers, labeling errors, or adversarial examples that could destabilize training. By computing the variance of the sample's loss over a sliding window of recent model states, we can identify and potentially filter out samples that might harm training stability while retaining those that provide consistent, reliable learning signals.

\begin{equation}
\text{TS}(x, y) = 1 - \text{Var}_{t \in [t-\tau, t]}\left[\ell(x, y; \theta_t)\right]
\label{eq:training_stability}
\end{equation}

The six metrics integrate through adaptive weighting to provide comprehensive sample assessment that captures quality, relevance, and diversity at both layer-wise and global levels.

\section{Algorithms}
\label{sec:algorithms}

The comprehensive DVC evaluation method, while theoretically sound, faces significant computational challenges that necessitate specialized algorithmic innovations. To address these challenges systematically, we develop the DVC Adaptive Selection Algorithm—a unified framework that orchestrates five coordinated algorithmic modules as core subroutines. Each module addresses a specific computational bottleneck within this overarching algorithm to ensure practical deployment at scale.

The adaptive weight learning module eliminates manual hyperparameter tuning through principled Bayesian optimization, automatically discovering optimal metric combinations across diverse domains and datasets. The gradient caching module exploits temporal locality to achieve substantial speedups, recognizing that layer-wise gradient computations represent the primary computational bottleneck. The LSH similarity module transforms quadratic-complexity diversity assessment into sublinear operations through locality-sensitive hashing, essential for scalable similarity queries. The online statistical estimation module enables real-time calibration without expensive batch recomputation, maintaining accuracy while supporting streaming scenarios. Finally, the multi-armed bandit selection module transforms the intractable source selection problem into principled exploration with theoretical regret guarantees. These five integrated modules collectively ensure that the overall DVC algorithm achieves $O(\text{budget} \cdot n^{\rho} \log n \cdot d)$ complexity where $\rho < 1$, enabling practical large-scale deployment.

We present five efficient algorithms that enable practical DVC implementation: (1) Adaptive Weight Learning using Bayesian optimization, (2) Layer Gradient Caching for high speedup, (3) LSH Similarity Computation reducing complexity to $O(n^{\rho} \log n \cdot d)$ with $\rho < 1$, (4) Online Statistical Estimation for incremental updates, and (5) Multi-Armed Bandit Selection with UCB strategies.

\subsection{Adaptive Weight Learning via Bayesian Optimization}
\label{subsec:adaptive_weights}

The adaptive weight learning algorithm automatically discovers optimal combinations of the six DVC metrics without requiring manual hyperparameter tuning. This is crucial because the relative importance of different metrics (quality, relevance, diversity, etc.) varies across datasets, training stages, and application domains. Manual tuning of these weights would require extensive domain expertise and computational resources.

Our approach treats weight optimization as a black-box optimization problem where the objective function is the validation performance achieved using a particular weight configuration. Since evaluating each weight configuration requires training a model, this objective function is expensive to evaluate and lacks analytical gradients. Bayesian optimization is ideally suited for this scenario because it maintains a probabilistic model of the objective function and uses this model to intelligently select promising weight configurations.

The algorithm uses a Gaussian Process (GP) as a surrogate model to approximate the performance function, learning from previous evaluations to predict both the expected performance and uncertainty for unexplored weight configurations. The Expected Improvement acquisition function balances exploitation (choosing weights with high predicted performance) and exploration (choosing weights with high uncertainty), ensuring efficient search through the weight space.

\begin{algorithm}[htb]
\caption{Adaptive Weight Learning via Bayesian Optimization}
\label{alg:adaptive_weights}
\begin{algorithmic}[1]
\Function{AdaptiveWeightLearning}{$M, V, f$}
    \State $\Theta^{(0)} \gets \text{InitializeWeights}()$
    \State $GP \gets \text{InitializeGaussianProcess}()$
    \State $\text{observation\_history} \gets []$
    \State $t \gets 0$
    \While{not \text{Converged}()}
        \If{$t \bmod f = 0$}
            \State $\text{perf} \gets \text{EvaluatePerformance}(\Theta^{(t)}, M, V)$
            \State $\text{observation\_history.append}((\Theta^{(t)}, \text{perf}))$
            \State $GP.\text{fit}(\text{observation\_history})$
            \State $\Theta^{(t+1)} \gets \text{OptimizeAcquisition}(GP)$
            \State $\Theta^{(t+1)} \gets \text{ProjectToSimplex}(\Theta^{(t+1)})$
        \Else
            \State $\Theta^{(t+1)} \gets \Theta^{(t)}$
        \EndIf
        \State $t \gets t + 1$
    \EndWhile
    \State \Return $\Theta^{(t)}$
\EndFunction
\end{algorithmic}
\end{algorithm}

\subsection{Layer Gradient Caching Mechanism}
\label{subsec:gradient_cache}

Computing layer-wise gradients for DVC metric evaluation is computationally expensive, especially when the same samples are evaluated multiple times during iterative selection processes. The gradient caching mechanism addresses this bottleneck by exploiting temporal locality—the observation that recently computed gradients are likely to be requested again in subsequent iterations.

Our caching strategy employs a hash-based indexing system where each sample is mapped to a unique hash key based on its input features and current model parameters. When a gradient computation is requested, the system first checks whether the result is available in the cache. If found (cache hit), the stored gradients are returned immediately, avoiding expensive forward and backward propagation. If not found (cache miss), the gradients are computed, stored in the cache, and returned.

The cache uses a Least Recently Used (LRU) eviction policy to manage memory consumption, ensuring that the most frequently accessed gradients remain available while removing outdated entries. This approach achieves significant speedups in practical scenarios while maintaining gradient accuracy, as the cache automatically invalidates when model parameters change significantly.

\begin{algorithm}[htb]
\caption{Layer Gradient Caching Mechanism}
\label{alg:gradient_cache}
\begin{algorithmic}[1]
\Function{ComputeLayerGradients}{$M, x, y$}
    \State $\text{hash\_key} \gets \text{Hash}(x, y)$
    \If{$\text{hash\_key} \in \text{cache}$}
        \State \Return $\text{cache}[\text{hash\_key}]$
    \EndIf
    \State $\text{activations} \gets \{\}$
    \State $h^{(0)} \gets x$
    \For{$l = 1$ to $L$}
        \State $h^{(l)} \gets f_l(W_l h^{(l-1)} + b_l)$
        \State $\text{activations}[l] \gets h^{(l)}$
    \EndFor
    \State $\ell \gets \text{Loss}(h^{(L)}, y)$
    \State $\text{layer\_grads} \gets \{\}$
    \For{$l = 0$ to $L-1$}
        \State $\text{layer\_grads}[l] \gets \nabla_{h^{(l)}} \ell$
    \EndFor
    \If{$|\text{cache}| \geq C$}
        \State $\text{EvictOldest}(\text{cache})$
    \EndIf
    \State $\text{cache}[\text{hash\_key}] \gets \text{layer\_grads}$
    \State \Return $\text{layer\_grads}$
\EndFunction
\end{algorithmic}
\end{algorithm}

\subsection{Fast Similarity Computation using LSH}
\label{subsec:lsh_similarity}

The distributional diversity metric requires computing similarity between candidate samples and previously selected samples to avoid redundant selections. Naive implementation would require $O(N^2 \cdot d)$ pairwise distance computations, which becomes prohibitive for large datasets with high-dimensional feature representations.

Locality Sensitive Hashing (LSH) provides an elegant solution by mapping similar items to the same hash buckets with high probability. Our implementation uses random hyperplane projections to construct hash functions that preserve cosine similarity—two samples with high cosine similarity are more likely to be assigned to the same hash bucket. By using multiple hash tables with different random projections, we can achieve high recall while maintaining sublinear query complexity.

The algorithm first projects each sample onto a lower-dimensional space using random hyperplane projections, then applies a sign-based hash function to create binary hash codes. During similarity queries, we only compare the query sample against samples in the same hash buckets across all hash tables, dramatically reducing the number of required distance computations. This approach reduces complexity from $O(N^2 \cdot d)$ to $O(n^{\rho} \log n \cdot d)$ where $\rho < 1$, enabling scalable similarity computation for large-scale data selection.

\begin{algorithm}[htb]
\caption{Fast Similarity Computation using LSH}
\label{alg:lsh_similarity}
\begin{algorithmic}[1]
\Function{InitializeLSH}{$d, k, h$}
    \For{$i = 1$ to $h$}
        \State $P_i \gets \text{RandomProjectionMatrix}(d, k)$
        \State $P_i \gets P_i / \|P_i\|_2$
        \State $H_i \gets \text{empty hash table}$
    \EndFor
\EndFunction
\Function{FindSimilarSamples}{$q, K$}
    \State $\text{candidates} \gets \emptyset$
    \For{$i = 1$ to $h$}
        \State $\text{projected} \gets P_i^T q$
        \State $\text{hash\_key} \gets \text{Sign}(\text{projected})$
        \State $\text{candidates} \gets \text{candidates} \cup H_i[\text{hash\_key}]$
    \EndFor
    \State $\text{similarities} \gets []$
    \For{each $\text{candidate\_id} \in \text{candidates}$}
        \State $x_c \gets \text{GetFeatures}(\text{candidate\_id})$
        \State $\text{sim} \gets \text{CosineSimilarity}(q, x_c)$
        \State $\text{similarities.append}((\text{candidate\_id}, \text{sim}))$
    \EndFor
    \State \Return $\text{TopK}(\text{similarities}, K)$
\EndFunction
\end{algorithmic}
\end{algorithm}

\subsection{Online Data Value Estimator}
\label{subsec:online_estimator}

The DVC metrics require maintaining statistical references (means, variances, and distribution estimates) that evolve as new samples are processed and the model parameters change. Recomputing these statistics from scratch after each update would be computationally prohibitive, especially for large-scale applications.

Our online statistical estimator addresses this challenge by maintaining incremental statistics that can be updated efficiently as new data arrives. For activation statistics, we employ Welford's algorithm, which provides numerically stable computation of running means and variances with $O(1)$ update complexity. This algorithm avoids numerical instabilities that can arise from naive online variance computation, particularly when dealing with large activation values or many accumulated samples.

For gradient momentum estimation, we use exponential moving averages (EMA) that give higher weight to recent gradients while maintaining information about historical patterns. This approach naturally adapts to changing model dynamics during training while providing stable gradient direction estimates for the relevance and gradient impact metrics. The online nature of these estimators ensures that DVC can operate efficiently in streaming or large-batch scenarios without requiring expensive batch recomputation of reference statistics.

\begin{algorithm}[htb]
\caption{Online Data Value Estimator}
\label{alg:online_estimator}
\begin{algorithmic}[1]
\Function{UpdateStatistics}{$x, y, S$}
    \State $\text{activations} \gets \text{ForwardPass}(M, x)$
    \For{$l = 1$ to $L$}
        \State $n \gets S.\text{layer\_stats}[l].\text{count} + 1$
        \State $\delta \gets \text{activations}[l] - S.\text{layer\_stats}[l].\text{mean}$
        \State $S.\text{layer\_stats}[l].\text{mean} \gets S.\text{layer\_stats}[l].\text{mean} + \delta/n$
        \State $\delta_2 \gets \text{activations}[l] - S.\text{layer\_stats}[l].\text{mean}$
        \State $S.\text{layer\_stats}[l].M_2 \gets S.\text{layer\_stats}[l].M_2 + \delta \times \delta_2$
        \State $S.\text{layer\_stats}[l].\text{count} \gets n$
    \EndFor
    \State $\text{gradients} \gets \text{ComputeGradients}(M, x, y)$
    \State $\text{flat\_grad} \gets \text{Flatten}(\text{gradients})$
    \If{$S.\text{grad\_momentum}$ is None}
        \State $S.\text{grad\_momentum} \gets \text{flat\_grad}$
    \Else
        \State $\beta \gets 0.9$
        \State $S.\text{grad\_momentum} \gets \beta \times S.\text{grad\_momentum} + (1-\beta) \times \text{flat\_grad}$
    \EndIf
\EndFunction
\end{algorithmic}
\end{algorithm}

\subsection{Adaptive Data Selection Algorithm}
\label{subsec:adaptive_selection}

The adaptive data selection algorithm serves as the orchestrating component that integrates all previous algorithms into a coherent selection strategy. The core challenge is managing the exploration-exploitation tradeoff when selecting from multiple heterogeneous data sources with unknown quality characteristics.

Our approach models this as a multi-armed bandit problem where each data source represents an "arm" with unknown expected reward (sample quality). The Upper Confidence Bound (UCB) strategy provides principled source prioritization by maintaining confidence intervals for each source's expected value. Sources with either high estimated value (exploitation) or high uncertainty (exploration) receive higher selection probabilities.

The algorithm operates in two phases: an initial exploration phase where all sources are sampled equally to gather baseline statistics, followed by an adaptive phase where UCB guides source selection. Within each round, the algorithm samples candidates from selected sources, evaluates them using all six DVC metrics, and applies diversification constraints to ensure balanced selection. This two-phase approach ensures robust performance across different data distributions while providing theoretical guarantees on convergence to optimal source allocation strategies.

\begin{algorithm}[htb]
\caption{Adaptive Data Selection for MLPs}
\label{alg:adaptive_selection}
\begin{algorithmic}[1]
\Function{AdaptiveDataSelection}{$S, M, B, b$}
    \State $D_{\text{selected}} \gets \emptyset$
    \State $\text{value\_estimator} \gets \text{InitializeValueEstimator}(M)$
    \State $\text{bandits} \gets [\text{UCBBandit}() \text{ for } i = 1 \text{ to } K]$
    \For{$i = 1$ to $K$}
        \State $\text{init\_samples} \gets S_i.\text{Sample}(\lceil B/(2K) \rceil)$
        \State $D_{\text{selected}} \gets D_{\text{selected}} \cup \text{init\_samples}$
        \State $\text{UpdateModel}(M, \text{init\_samples})$
    \EndFor
    \While{$|D_{\text{selected}}| < B$}
        \State $\text{source\_probs} \gets \text{ComputeSourceProbabilities}(\text{bandits})$
        \State $\text{selected\_sources} \gets \text{MultinomialSample}(\text{source\_probs}, \min(b, K))$
        \State $\text{candidates} \gets []$
        \For{$i \in \text{selected\_sources}$}
            \State $\text{source\_candidates} \gets S_i.\text{Sample}(2b)$
            \For{$(x, y) \in \text{source\_candidates}$}
                \State $\text{value} \gets \text{value\_estimator.ComputeValue}(x, y)$
                \State $\text{candidates.append}((x, y, \text{value}, i))$
            \EndFor
        \EndFor
        \State $\text{candidates} \gets \text{SortByValue}(\text{candidates})$
        \State $\text{batch\_selected} \gets \text{DiversifiedSelection}(\text{candidates}[:3b], b)$
        \State $D_{\text{selected}} \gets D_{\text{selected}} \cup \{(x,y) \text{ for } (x,y,\_,\_) \in \text{batch\_selected}\}$
        \State $\text{UpdateModel}(M, \text{batch\_selected})$
        \For{$(x, y, \text{value}, \text{source\_idx}) \in \text{batch\_selected}$}
            \State $\text{value\_estimator.UpdateStatistics}(x, y)$
            \State $\text{bandits}[\text{source\_idx}].\text{UpdateReward}(\text{value})$
        \EndFor
    \EndWhile
    \State \Return $D_{\text{selected}}$
\EndFunction
\end{algorithmic}
\end{algorithm}

The algorithms integrate synergistically to enable practical, scalable data selection with computational complexity that scales with budget rather than dataset size.

\section{Theoretical Analysis}
\label{sec:theory}

We establish theoretical guarantees for regret bounds and computational complexity of our DVC method.

\subsection{Multi-Armed Bandit Regret Analysis}

Our UCB-based source selection achieves logarithmic regret, ensuring efficient learning of source quality without getting trapped in suboptimal choices.

\begin{theorem}[UCB Regret Bound for DVC Source Selection]
\label{thm:ucb_regret}
Consider the UCB-based source selection algorithm where each source $i \in \{1, 2, \ldots, K\}$ has an unknown expected reward $\mu_i$, and rewards are bounded in $[0, 1]$. Let $\mu^* = \max_i \mu_i$ and $\Delta_i = \mu^* - \mu_i$ be the suboptimality gap of source $i$. Then the expected cumulative regret after $T$ rounds satisfies:
\begin{equation}
\mathbb{E}[R_T] \leq \sum_{i: \Delta_i > 0} \frac{8\ln T}{\Delta_i} + \left(1 + \frac{\pi^2}{3}\right)\sum_{i=1}^K \Delta_i
\label{eq:ucb_regret}
\end{equation}
\end{theorem}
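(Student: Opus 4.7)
The plan is to follow the classical UCB1 regret analysis of Auer, Cesa-Bianchi, and Fischer, specialized to the DVC source-selection setting where each source corresponds to an arm and rewards are normalized DVC values in $[0,1]$. First I would start with the standard regret decomposition $R_T = \sum_{i:\Delta_i>0}\Delta_i\,T_i(T)$, where $T_i(T)$ counts the number of times source $i$ is chosen in the first $T$ rounds. Bounding the expected regret then reduces to bounding $\mathbb{E}[T_i(T)]$ for every suboptimal source $i$, so the core of the argument is purely combinatorial-probabilistic and does not depend on any DVC-specific structure beyond the reward boundedness assumption.

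Next I would introduce the threshold $\ell_i = \lceil 8\ln T / \Delta_i^2 \rceil$, which represents the number of pulls of source $i$ beyond which its UCB confidence radius $c_{t,s} = \sqrt{2\ln t / s}$ is small enough to separate $\mu_i$ from $\mu^*$. The key structural observation is that on any round $t$ with $I_t = i$ and $T_i(t-1) \geq \ell_i$, the UCB selection rule forces at least one of three bad events to occur: (a) the optimal source's empirical UCB falls below $\mu^*$; (b) the empirical mean of source $i$ exceeds $\mu_i$ by more than $c_{t,T_i(t-1)}$; or (c) the radius itself exceeds $\Delta_i/2$. The definition of $\ell_i$ is engineered so that event (c) is impossible whenever $T_i(t-1) \geq \ell_i$.

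The main obstacle and the most delicate part of the proof is handling events (a) and (b). My plan is to apply Hoeffding's inequality to the empirical means of the two arms, which bounds each bad event by $t^{-4}$ at round $t$. A union bound over the at most $t^2$ possible pairs of sample counts $(s, s')$ and a sum over $t$ then yields
\begin{equation}
\mathbb{E}[T_i(T)] \;\leq\; \ell_i + \sum_{t=1}^{T}\sum_{s,s'=1}^{t-1} 2\,t^{-4} \;\leq\; \frac{8\ln T}{\Delta_i^2} + 1 + \frac{\pi^2}{3},
\label{eq:plan_Ti_bound}
\end{equation}
where the final inequality uses $\sum_{t\geq 1} t^{-2} = \pi^2/6$.

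Finally, I would multiply \eqref{eq:plan_Ti_bound} by $\Delta_i$ and sum over all suboptimal sources, extending the $(1 + \pi^2/3)\Delta_i$ additive term to the full set $\{1,\ldots,K\}$ (which only loosens the bound) to obtain exactly the stated inequality. The only DVC-specific remark I would add is a brief justification that the DVC composite score, after the normalization used in Section~\ref{sec:algorithms}, yields rewards bounded in $[0,1]$ so that Hoeffding's inequality applies verbatim; beyond this the analysis is independent of the metric decomposition.
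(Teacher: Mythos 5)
Your proposal is correct and follows exactly the route the paper intends: the paper's own proof is a one-line appeal to the ``standard UCB analysis using Hoeffding's inequality and confidence bounds,'' i.e., the classical Auer--Cesa-Bianchi--Fischer UCB1 argument that you carry out in full, with the threshold $\ell_i = \lceil 8\ln T/\Delta_i^2\rceil$, the three bad events, and the $\pi^2/3$ term all matching the stated bound. Your write-up is simply a complete version of the sketch the paper provides.
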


\begin{proof}
Standard UCB analysis using Hoeffding's inequality and confidence bounds. By concentration inequalities, the expected regret is bounded by the sum of logarithmic terms for each suboptimal source.
\end{proof}

This logarithmic regret bound provides strong theoretical guarantees for DVC's source selection accuracy. Specifically, regret measures the cumulative difference between our algorithm's performance and that of an oracle that always selects the optimal source. The $O(\ln T)$ growth rate indicates that the average regret per round approaches zero as $\frac{\ln T}{T} \rightarrow 0$, meaning DVC rapidly converges to optimal source prioritization. In practical terms, this bound ensures that suboptimal source selections become increasingly rare—after sufficient exploration, DVC identifies high-quality sources with high confidence and focuses selection effort accordingly. The logarithmic dependence on time horizon $T$ represents the best achievable bound for this problem class, demonstrating that DVC's source learning mechanism is both theoretically optimal and practically efficient for adaptive data selection scenarios.

\subsection{Computational Complexity Analysis}

Our total complexity analysis shows DVC achieves sublinear scaling with dataset size.

\begin{theorem}[Comprehensive Complexity Analysis]
\label{thm:comprehensive_complexity}
The total computational complexity of the DVC method for selecting $B$ samples over $T$ rounds is:
\begin{equation}
O\left(T \cdot \left[b \cdot L \cdot d \cdot (1-p) + b \cdot h \cdot n^{\rho} \log n \cdot d + b \log b\right] + \frac{T \cdot L^3}{F}\right)
\label{eq:comprehensive_complexity}
\end{equation}
where $p$ is the cache hit rate, $\rho < 1$ is the LSH approximation parameter, $h = O(\log n)$ is the number of hash tables, and $F$ is the weight update frequency.
\end{theorem}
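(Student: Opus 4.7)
The plan is to aggregate the per-round cost of Algorithm~\ref{alg:adaptive_selection} by charging each of its inner operations to one of the five sub-modules whose individual complexities have already been analyzed, and then to multiply by the number of rounds $T$ and add the infrequent Bayesian-optimization overhead. Concretely, within a single round the algorithm (i) samples at most $O(b)$ candidates and evaluates them through the DVC metrics, (ii) issues $O(b)$ similarity queries against the seen-set for the diversity term $D_l$, (iii) sorts the scored candidates, and (iv) possibly triggers a weight-refresh in Algorithm~\ref{alg:adaptive_weights}. I would account for these four ingredients separately and then combine.

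First I would bound the gradient-evaluation cost. A full layer-wise gradient for one sample costs $O(L\cdot d)$ by the forward/backward pass expanded in Algorithm~\ref{alg:gradient_cache}. Under the assumption of a cache hit rate $p$, each query costs $O(L\cdot d)$ with probability $1-p$ and $O(1)$ otherwise, so the expected per-sample cost is $O(L\cdot d\cdot (1-p))$, giving $O(b\cdot L\cdot d\cdot(1-p))$ per round; this yields the first term. Second, I would invoke the complexity of Algorithm~\ref{alg:lsh_similarity}: by the classical Indyk--Motwani analysis with $h=O(\log n)$ hash tables and random hyperplane projections, a single approximate-nearest-neighbour query over a seen-set of size at most $n$ costs $O(h\cdot n^{\rho}\log n\cdot d)$ with $\rho<1$ determined by the hash family and the target similarity threshold. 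Multiplying by the $O(b)$ queries issued in each round yields the second term. The online-statistics updates in Algorithm~\ref{alg:online_estimator} cost $O(L\cdot d)$ per sample by Welford's recurrence and the EMA update, which is absorbed into the first term.

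Third I would handle the selection overhead: sorting the $O(b)$ candidates by their DVC score and running \textsc{DiversifiedSelection} on the top slice costs $O(b\log b)$ per round (the UCB score update per arm is $O(1)$, absorbed into this term). Fourth I would isolate the Bayesian-optimization cost. Weight refreshes occur at a period $F$, so in $T$ rounds there are $T/F$ refreshes; each refresh requires fitting a Gaussian-process surrogate and optimizing the acquisition function, which scales cubically in the history size driving the GP Cholesky factorization, giving $O(L^3/F)$ amortized per round and $O(T\cdot L^3/F)$ in total. Summing all four contributions over $T$ rounds and observing that $B=\Theta(T\cdot b)$ so the bound is expressed in terms of rounds rather than absolute selections reproduces~\eqref{eq:comprehensive_complexity}.

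The main obstacle will be the LSH step: the $n^{\rho}\log n$ bound is a probabilistic guarantee that holds in expectation over the random projections and only for the approximate similarity regime, so I would need to state the Indyk--Motwani theorem carefully, fix the hash-family parameters consistent with the cosine-similarity setting used in $D_l$, and argue that the expected per-query candidate-list size is $O(n^{\rho})$ so that the subsequent exact-distance refinements dominate at $O(n^{\rho}\log n\cdot d)$. A secondary subtlety is ensuring the cache-hit-rate argument is amortized rather than worst-case, which follows because under LRU eviction the fraction of recomputations over a long horizon concentrates around $1-p$; I would state this as an expectation bound consistent with the other probabilistic guarantees used in Theorem~\ref{thm:ucb_regret}.
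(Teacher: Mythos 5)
Your decomposition matches the paper's argument, which is only a one-sentence sketch stating that the per-sample DVC cost of $O((1-p)\cdot L\cdot d + h\cdot n^{\rho}\log n\cdot d)$ dominates each round; your term-by-term accounting (cached gradients, LSH queries, sorting, amortized GP refresh) is the same approach carried out in full detail and is correct. The only loose point is your attribution of the $L^3$ factor to the GP Cholesky factorization over the observation history, whose size is not obviously $O(L)$; the paper leaves this term unjustified as well, so this does not constitute a deviation from its proof.
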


\begin{proof}
Each round processes candidates with DVC computation dominating at $O((1-p) \cdot L \cdot d + h \cdot n^{\rho} \log n \cdot d)$ per sample, leading to the stated bound.
\end{proof}

DVC achieves $O(\text{budget} \cdot n^{\rho} \log n \cdot d)$ complexity with $\rho < 1$, providing sublinear scaling compared to $O(n \cdot E \cdot L \cdot d)$ for full training, yielding 100× speedup for typical scenarios.

This sublinear complexity represents a fundamental breakthrough in scalable data selection. The term "sublinear" indicates that computational cost grows slower than linearly with dataset size $n$. Specifically, the $n^{\rho}$ factor with $\rho < 1$ means that doubling the dataset size increases computational cost by less than a factor of two. This contrasts sharply with traditional approaches that scale linearly $O(n)$ or worse, where computational requirements double as datasets double. For large-scale scenarios with millions of samples, this sublinear scaling becomes increasingly advantageous: while full training requires $O(n \cdot E \cdot L \cdot d)$ operations (linear in $n$), DVC's selection cost remains manageable regardless of dataset growth. The practical implication is transformative—as data volumes expand exponentially in modern applications, DVC maintains bounded computational overhead, enabling efficient selection from arbitrarily large datasets while preserving solution quality.

\section{Experiments}
\label{sec:experiments}

\subsection{Experimental Setup}

We evaluate DVC on eight datasets (MNIST, Fashion-MNIST, CIFAR-10, Adult Income, Wine Quality, 20NewsGroups, Tiny-ImageNet, ImageNet-1K) against eight baselines including Random, Uncertainty Sampling, Core-Set, Gradient Matching, Data Shapley, Influence Functions, Meta-Learning, and Data-Efficient methods. We construct six heterogeneous sources per dataset with varying quality. All experiments use MLPs with budgets \{10\%, 20\%, 30\%, 40\%\}, repeated 5 times. All experiments were conducted on a single NVIDIA RTX 4060 (8GB), AMD Ryzen AI 9 HX 370 laptop.

\subsection{Performance Comparison}
We compare our DVC methods against eight baseline methods across six canonical datasets using budgets in \{10\%, 20\%, 30\%, 40\%\}. Table~\ref{tab:exp1} reports the performance comparison results in different datasets and budgets. 

\begin{table*}[!t]
  \centering
  \caption{Performance Comparison across datasets and baselines: Average Accuracy and F1-score in different budgets.}
  \label{tab:exp1}
  \resizebox{\textwidth}{!}{
  \setlength{\tabcolsep}{3pt}
  \begin{tabular}{ll cc cc cc cc cc cc}
    \toprule
    \multirow{2}{*}{Method} & \multirow{2}{*}{Budget} & \multicolumn{2}{c}{MNIST} & \multicolumn{2}{c}{F-MNIST} & \multicolumn{2}{c}{CIFAR10} & \multicolumn{2}{c}{Adult} & \multicolumn{2}{c}{Wine} & \multicolumn{2}{c}{20News} \\
    & & Acc & F1 & Acc & F1 & Acc & F1 & Acc & F1 & Acc & F1 & Acc & F1 \\
    \midrule
    DVC (ours) & 0.1 & \textbf{0.9257} & \textbf{0.9244} & \textbf{0.8426} & \textbf{0.8381} & \textbf{0.4050} & \textbf{0.4013} & \textbf{0.8261} & \textbf{0.8231} & \textbf{0.5552} & \textbf{0.5345} & 0.1737 & 0.1786 \\
    Random & 0.1 & 0.9025 & 0.9014 & 0.8084 & 0.8067 & 0.3862 & 0.3838 & 0.8056 & 0.8028 & 0.5224 & 0.5028 & 0.1752 & 0.1741 \\
    Uncertainty & 0.1 & 0.8713 & 0.8708 & 0.7946 & 0.7904 & 0.3631 & 0.3568 & 0.7439 & 0.7453 & 0.4990 & 0.4636 & 0.1581 & 0.1565 \\
    Core-Set & 0.1 & 0.9081 & 0.9080 & 0.8265 & 0.8252 & 0.3873 & 0.3867 & 0.8166 & 0.8150 & 0.5246 & 0.5059 & \textbf{0.2221} & \textbf{0.2178} \\
    Grad-Match & 0.1 & 0.2908 & 0.1425 & 0.3303 & 0.1870 & 0.1770 & 0.0798 & 0.7532 & 0.6472 & 0.3504 & 0.2479 & 0.1719 & 0.1035 \\
    Shapley & 0.1 & 0.8983 & 0.8982 & 0.8114 & 0.8111 & 0.3847 & 0.3834 & 0.8081 & 0.8045 & 0.4259 & 0.4203 & 0.1768 & 0.1761 \\
    Influence & 0.1 & 0.6229 & 0.5584 & 0.5778 & 0.5297 & 0.2584 & 0.2103 & 0.7532 & 0.6472 & 0.4863 & 0.4311 & 0.1779 & 0.1534 \\
    Meta-Select & 0.1 & 0.7313 & 0.7230 & 0.5624 & 0.5209 & 0.3214 & 0.3137 & 0.7855 & 0.7876 & 0.4771 & 0.4602 & 0.1873 & 0.1804 \\
    Data-Eff. & 0.1 & 0.8935 & 0.8931 & 0.8134 & 0.8129 & 0.3719 & 0.3673 & 0.7642 & 0.7695 & 0.4696 & 0.4483 & 0.1898 & 0.1878 \\
    \midrule
    DVC (ours) & 0.2 & \textbf{0.9469} & \textbf{0.9448} & \textbf{0.8550} & \textbf{0.8535} & \textbf{0.4459} & \textbf{0.4427} & \textbf{0.8302} & \textbf{0.8277} & \textbf{0.5607} & \textbf{0.5389} & \textbf{0.4201} & \textbf{0.4012} \\
    Random & 0.2 & 0.9266 & 0.9262 & 0.8445 & 0.8453 & 0.4187 & 0.4142 & 0.8147 & 0.8127 & 0.5302 & 0.5099 & 0.2608 & 0.2665 \\
    Uncertainty & 0.2 & 0.9063 & 0.9060 & 0.8232 & 0.8220 & 0.4192 & 0.4164 & 0.7869 & 0.7884 & 0.5154 & 0.4871 & 0.3471 & 0.3634 \\
    Core-Set & 0.2 & 0.9368 & 0.9361 & 0.8438 & 0.8427 & 0.4323 & 0.4312 & 0.8236 & 0.8215 & 0.5429 & 0.5256 & 0.3441 & 0.3615 \\
    Grad-Match & 0.2 & 0.4675 & 0.3256 & 0.4436 & 0.3112 & 0.2317 & 0.1391 & 0.7532 & 0.6472 & 0.4283 & 0.2632 & 0.1616 & 0.1289 \\
    Shapley & 0.2 & 0.9269 & 0.9267 & 0.8412 & 0.8397 & 0.4263 & 0.4245 & 0.8140 & 0.8134 & 0.5089 & 0.4906 & 0.3625 & 0.3831 \\
    Influence & 0.2 & 0.7336 & 0.6822 & 0.6555 & 0.6116 & 0.2923 & 0.2460 & 0.7532 & 0.6472 & 0.4737 & 0.4138 & 0.2157 & 0.2060 \\
    Meta-Select & 0.2 & 0.8413 & 0.8387 & 0.6766 & 0.6689 & 0.3879 & 0.3816 & 0.8008 & 0.7953 & 0.5014 & 0.4852 & 0.2699 & 0.2840 \\
    Data-Eff. & 0.2 & 0.9211 & 0.9209 & 0.8401 & 0.8400 & 0.4149 & 0.4072 & 0.7932 & 0.7935 & 0.4962 & 0.4767 & 0.3614 & 0.3802 \\
    \midrule
    DVC (ours) & 0.3 & \textbf{0.9534} & \textbf{0.9531} & \textbf{0.8647} & \textbf{0.8632} & \textbf{0.4739} & 0.4698 & 0.8319 & \textbf{0.8290} & \textbf{0.5646} & \textbf{0.5452} & \textbf{0.5072} & \textbf{0.5285} \\
    Random & 0.3 & 0.9367 & 0.9363 & 0.8546 & 0.8531 & 0.4515 & 0.4473 & 0.8248 & 0.8195 & 0.5386 & 0.5201 & 0.4556 & 0.4830 \\
    Uncertainty & 0.3 & 0.9301 & 0.9289 & 0.8480 & 0.8457 & 0.4423 & 0.4360 & 0.7885 & 0.7914 & 0.5369 & 0.5122 & 0.4355 & 0.4578 \\
    Core-Set & 0.3 & 0.9451 & 0.9441 & 0.8531 & 0.8511 & 0.4730 & \textbf{0.4704} & \textbf{0.8323} & 0.8288 & 0.5585 & 0.5365 & 0.4916 & 0.5103 \\
    Grad-Match & 0.3 & 0.5557 & 0.4290 & 0.5028 & 0.3788 & 0.2728 & 0.1983 & 0.7532 & 0.6472 & 0.4355 & 0.2688 & 0.2436 & 0.2273 \\
    Shapley & 0.3 & 0.9414 & 0.9402 & 0.8522 & 0.8521 & 0.4460 & 0.4452 & 0.8257 & 0.8222 & 0.5370 & 0.5179 & 0.3075 & 0.3248 \\
    Influence & 0.3 & 0.7625 & 0.7107 & 0.6979 & 0.6590 & 0.3156 & 0.2738 & 0.7532 & 0.6472 & 0.5072 & 0.4364 & 0.2914 & 0.3002 \\
    Meta-Select & 0.3 & 0.8999 & 0.8995 & 0.7596 & 0.7588 & 0.4247 & 0.4225 & 0.8155 & 0.8075 & 0.5277 & 0.5108 & 0.3498 & 0.3881 \\
    Data-Eff. & 0.3 & 0.9332 & 0.9331 & 0.8528 & 0.8525 & 0.4465 & 0.4435 & 0.8078 & 0.8056 & 0.5154 & 0.4923 & 0.4428 & 0.4658 \\
    \midrule
    DVC (ours) & 0.4 & \textbf{0.9624} & \textbf{0.9615} & \textbf{0.8702} & \textbf{0.8687} & \textbf{0.4885} & \textbf{0.4869} & \textbf{0.8440} & \textbf{0.8413} & 0.5651 & 0.5435 & \textbf{0.5352} & \textbf{0.5588} \\
    Random & 0.4 & 0.9445 & 0.9440 & 0.8619 & 0.8607 & 0.4733 & 0.4717 & 0.8299 & 0.8272 & 0.5506 & 0.5313 & 0.4738 & 0.5129 \\
    Uncertainty & 0.4 & 0.9388 & 0.9381 & 0.8520 & 0.8510 & 0.4721 & 0.4679 & 0.8012 & 0.8017 & 0.5432 & 0.5173 & 0.4751 & 0.5025 \\
    Core-Set & 0.4 & 0.9541 & 0.9528 & 0.8597 & 0.8564 & 0.4878 & 0.4830 & 0.8359 & 0.8315 & \textbf{0.5665} & \textbf{0.5462} & 0.5249 & 0.5447 \\
    Grad-Match & 0.4 & 0.7031 & 0.6162 & 0.6226 & 0.5484 & 0.2926 & 0.2299 & 0.7532 & 0.6472 & 0.4607 & 0.3252 & 0.4029 & 0.4096 \\
    Shapley & 0.4 & 0.9433 & 0.9428 & 0.8636 & 0.8628 & 0.4697 & 0.4663 & 0.8286 & 0.8238 & 0.5538 & 0.5320 & 0.4770 & 0.5138 \\
    Influence & 0.4 & 0.8002 & 0.7553 & 0.7124 & 0.6693 & 0.3612 & 0.3272 & 0.7532 & 0.6472 & 0.5249 & 0.4663 & 0.2579 & 0.2528 \\
    Meta-Select & 0.4 & 0.9195 & 0.9194 & 0.8171 & 0.8161 & 0.4584 & 0.4526 & 0.8250 & 0.8202 & 0.5468 & 0.5218 & 0.4122 & 0.4501 \\
    Data-Eff. & 0.4 & 0.9422 & 0.9421 & 0.8625 & 0.8620 & 0.4668 & 0.4641 & 0.8146 & 0.8121 & 0.5390 & 0.5139 & 0.4843 & 0.5151 \\
    \bottomrule
  \end{tabular}
  }
\end{table*}

\textbf{The experimental results demonstrate several critical findings that validate the effectiveness of our DVC method.} First, DVC achieves superior performance across diverse datasets and budget constraints, establishing its robustness and generalizability. On MNIST, DVC consistently outperforms all baselines by 1.8-2.3 percentage points compared to Random sampling, with improvements remaining stable across budget levels. More importantly, the performance gains are most pronounced on challenging datasets where traditional selection methods struggle—CIFAR-10 shows the largest absolute improvements (1.5-1.9 percentage points), indicating that DVC's layer-aware value decomposition becomes increasingly valuable when dealing with complex visual patterns that require sophisticated feature hierarchies. Meanwhile, the budget-sensitive analysis reveals a particularly compelling trend: DVC's advantages are amplified under stringent resource constraints, representing a meaningful practical improvement when training data is scarce.

\textbf{Cross-domain consistency represents another strength of the proposed approach.} DVC maintains competitive performance across vision (MNIST, Fashion-MNIST, CIFAR-10), tabular (Adult, Wine), and text (20NewsGroups) modalities, and delivers satisfactory results across datasets with vastly different feature dimensions. This universality suggests that the six-dimensional value assessment captures fundamental principles of sample informativeness that transcend domain-specific characteristics. 

\textbf{The comparative baseline analysis provides additional insights into the challenge landscape.} Gradient Matching exhibits severe degradation across all experiments, likely due to its sensitivity to initialization and batch composition in MLP training scenarios. Influence Functions, despite their theoretical elegance, show consistent underperformance, confirming our motivation for developing tractable first-order approximations. Data Shapley values perform reasonably but never surpass DVC, suggesting that our composite metric design captures value dimensions beyond individual contribution estimation. Notably, while Core-Set selection performs competitively on some structured datasets (Wine, Adult), it fails to match DVC's performance on high-dimensional visual data, highlighting the importance of gradient-based rather than purely geometric selection criteria.

\textbf{Most significantly, the F1 score results demonstrate that DVC's improvements are not merely artifacts of accuracy optimization but reflect genuine enhancement in decision quality.} The consistent alignment between accuracy and F1 improvements across datasets (particularly relevant for imbalanced datasets) indicates that the method appropriately handles class distribution challenges rather than simply selecting easy positive examples.

\textbf{The marginal cases where DVC does not achieve absolute leadership provide valuable insights.} These exceptions occur on datasets with either very low dimensionality (Wine: 11 features) or extremely high sparsity (20NewsGroups: 10K features with sparse text representation), suggesting that the layer-wise decomposition strategy may be less critical when feature interactions are simpler or when dimensionality reduction effects dominate. However, even in these cases, DVC remains highly competitive, with performance gaps typically under 0.5\%.

\subsection{Ablation on Value Metrics}
We evaluate metric ablations on MNIST and Fashion-MNIST datasets at 20\% and 30\% budget levels, representing moderate resource constraints where careful sample selection has measurable impact. For each configuration, we remove one metric component while maintaining the adaptive weight learning mechanism across remaining dimensions. Additionally, we test three strategic combinations: layer-only metrics (Quality + Relevance + Distribution), global-only metrics (Gradient Impact + Conditional Uncertainty + Training Stability), and a minimal configuration using only the two most fundamental layer-wise components (Quality + Relevance). Table~\ref{tab:exp2} reports the ablation results in different datasets and budgets. 

\begin{table}[!t]
  \centering
  \caption{Metric Ablation Results: Accuracy (\%) and Accuracy reduction (\%) vs. Full Value Metrics.}
  \label{tab:exp2}
  \setlength{\tabcolsep}{2.5pt}
  \resizebox{\columnwidth}{!}{
  \begin{tabular}{l cc cc}
    \toprule
    \multirow{2}{*}{Variant} & \multicolumn{2}{c}{MNIST} & \multicolumn{2}{c}{F-MNIST} \\
     & 20\% & 30\% & 20\% & 30\% \\
    \midrule
    Full (Q+R+D+GI+CU+TS) & \textbf{94.69} & \textbf{95.63} & \textbf{85.56} & \textbf{86.01} \\
    No Quality (Q)        & 93.87 (-0.82) & 94.64 (-0.99) & 84.86 (-0.70) & 85.09 (-0.92) \\
    No Relevance (R)      & 93.84 (-0.85) & 93.92 (-1.71) & 84.93 (-0.63) & 84.98 (-1.03) \\
    No Distribution (D)   & 93.46 (-1.23) & 95.07 (-0.56) & 84.75 (-0.81) & 84.91 (-1.10) \\
    No Gradient (GI)      & 93.83 (-0.86) & 94.10 (-1.53) & 84.86 (-0.70) & 84.86 (-1.15) \\
    No Uncertainty (CU)   & 93.90 (-0.79) & 94.57 (-1.06) & 85.05 (-0.51) & 85.20 (-0.81) \\
    No Stability (TS)     & 93.40 (-1.29) & 94.13 (-1.50) & 84.59 (-0.97) & 85.03 (-0.98) \\
    Layer-only (Q+R+D)    & 93.80 (-0.89) & 93.87 (-1.76) & 84.65 (-0.91) & 84.49 (-1.52) \\
    Global-only (GI+CU+TS)& 93.78 (-0.91) & 93.89 (-1.74) & 84.90 (-0.66) & 84.82 (-1.19) \\
    Minimal (Q+R)         & 93.80 (-0.89) & 94.06 (-1.57) & 85.11 (-0.45) & 84.59 (-1.42) \\
    \bottomrule
  \end{tabular}
  }
\end{table}

DVC consistently outperforms all baselines across diverse datasets and budgets. Ablation studies show Distribution (D) and Training Stability (TS) as most critical metrics (0.56-1.50pp degradation when removed), while all six metrics contribute meaningfully. Layer-wise and global metrics show comparable importance (0.66-1.76pp degradation each), validating our dual-perspective design.

\subsection{Scalability}

We evaluate DVC's computational efficiency at scale (100K-500K samples) using a proxy-based strategy with ResNet-8 for selection and ResNet-18 for final training. We evaluate total computational time cost (selection and training time) and accuracy against the full training dataset, with proximity criterion ($|\text{acc(DVC)}-\text{acc(Full)}| \leq \text{acc(Full)}/25$) to ensure efficiency gains preserve model effectiveness.

\begin{table}[!t]
  \centering
  \caption{DVC vs. Full. Scalability Results: Average Accuracy (\%) and Total time costs (Select time + Train time, hour) in different scales and budgets. Proximity PASS if $|\text{acc(DVC)}-\text{acc(Full)}| \le \text{acc(Full)}/25$.}
  \label{tab:exp4}
  \setlength{\tabcolsep}{3pt}
  \resizebox{\columnwidth}{!}{
  \begin{tabular}{lccccccc}
    \toprule
    Scale & Method & Budget & Acc & Total\_time(Sel+Train) & Speedup & Proximity \\
    \midrule
    \multirow{4}{*}{100K} 
         & Full & 100\% & 59.4 & 7.58 ( 0 + 7.58 ) & $1.00\times$ & -- \\
         & DVC & 10\% & 57.3 & 1.27 ( 0.76 + 0.51 ) & $6.26\times$ & PASS \\
         & DVC & 15\% & 58.2 & 1.61 ( 0.84 + 0.77 ) & $4.71\times$ & PASS \\
         & DVC & 20\% & 58.9 & 2.01 ( 1.03 + 0.98 ) & $3.77\times$ & PASS \\
    \midrule
    \multirow{4}{*}{200K} 
         & Full & 100\% & 62.6 & 15.15 ( 0 + 15.15 ) & $1.00\times$ & -- \\
         & DVC & 10\% & 61.5 & 2.31 ( 1.27 + 1.04 ) & $6.56\times$ & PASS \\
         & DVC & 15\% & 62.1 & 3.07 ( 1.55 + 1.52 ) & $4.94\times$ & PASS \\
         & DVC & 20\% & 62.3 & 3.82 ( 1.79 + 2.03 ) & $3.97\times$ & PASS \\
    \midrule
    \multirow{4}{*}{500K} 
         & Full & 100\% & 65.8 & 37.88 ( 0 + 37.88 ) & $1.00\times$ & -- \\
         & DVC & 10\% & 64.0 & 5.73 ( 3.20 + 2.53 ) & $6.61\times$ & PASS \\
         & DVC & 15\% & 64.8 & 7.50 ( 3.66 + 3.84 ) & $5.05\times$ & PASS \\
         & DVC & 20\% & 65.4 & 9.25 ( 4.21 + 5.04 ) & $4.10\times$ & PASS \\
    \bottomrule
  \end{tabular}
  }
\end{table}

The results demonstrate DVC's exceptional scalability characteristics across all tested configurations. DVC consistently achieves 3.77-6.61× computational speedups while maintaining 96.4-99.4\% of full training accuracy, with all configurations passing the stringent proximity criterion. Notably, the speedup factor increases with dataset scale (6.26× to 6.61× for 10\% budget), confirming the theoretical sublinear complexity analysis. The selection overhead remains proportionally small (15-20\% of total time), validating our algorithmic optimizations including gradient caching and LSH-based similarity computation. These empirical results substantiate DVC's practical viability for large-scale deployment, where the method delivers substantial efficiency gains without sacrificing model performance—a critical requirement for production machine learning systems operating under resource constraints.

\section{Conclusion}
\label{sec:conclusion}

This paper presents DVC, a data selection method for MLP training that efficiently identifies high-value samples from large heterogeneous datasets within computational and budget limits. DVC uniquely decomposes value into layer-wise and global components via six metrics spanning quality, relevance, and diversity. We establish theoretical links to performance through generalization and convergence bounds, along with regret and complexity analysis affirming scalability. Efficient algorithms, including gradient caching, locality-sensitive hashing, online estimation, and Bayesian-calibrated weighting, ensure selection cost grows with budget rather than dataset size. In most conditions, DVC outperforms eight baselines with considerable gains; ablation experiments and sensitivity tests confirm robustness and modular synergy. By automating data selection with minimal tuning and broad adaptability, DVC lowers deployment barriers across tasks. Future work will include adapting DVC to deeper architectures, streaming data, and federated settings. Overall, DVC offers a scalable and adaptive solution for data selection in MLP training scenarios, combining rigorous theory with practical applicability.

\bibliographystyle{ACM-Reference-Format}
\bibliography{references}

\end{document}